\newtheorem{theorem}{Theorem}
\newtheorem{lemma}[theorem]{Lemma}
\newtheorem{corollary}[theorem]{Corollary}
\newtheorem{definition}{Definition}[section]
\title{Coresets for Estimating Means and Mean Square Error with Limited Greedy Samples}
\author{ {\bf Saeed Vahidian} \\
Electrical and Computer Engineering \\
University of California San Diego\\
San Diego, CA 92093 \\
\And
{\bf Baharan Mirzasoleiman}  \\
Computer Science \\
Stanford University \\
Stanford, CA 94305\\
\And
{\bf Alexander Cloninger}   \\
Mathematics, HDSI \\
University of California San Diego    \\
San Diego, CA 92093
}
\begin{document}

\maketitle
\begin{abstract}
  In a number of situations, collecting a function value for every data point may be prohibitively expensive, and random sampling ignores any structure in the underlying data. We introduce a scalable optimization algorithm with no correction steps (in contrast to Frank–Wolfe and its variants), a variant of gradient ascent for coreset selection in graphs, that greedily selects a weighted subset of vertices that are deemed most important to sample. Our algorithm estimates the mean of the function by taking a weighted sum only at these vertices, and we provably bound the estimation error in terms of the location and weights of the selected vertices in the graph. In addition, we consider the case where nodes have different selection costs and provide bounds on the quality of the low-cost selected coresets. We demonstrate the benefits of our algorithm on the semi-supervised node classification of graph convolutional neural network, point clouds and structured graphs, as well as sensor placement where the cost of placing sensors depends on the location of the placement. We also elucidate that the empirical convergence of our proposed method is faster than random selection and various clustering methods while still respecting sensor placement cost. The paper concludes with validation of the developed algorithm on both synthetic and real datasets, demonstrating that it outperforms the current state of the art.
\end{abstract}

\section{INTRODUCTION}
In many problems in sociology, finance, computer science, and operations research, we have networks of interconnected entities and pairwise relations between them. 
A problem that arises often in practice is calculating the expected value of a variable in the form of the sum of the values of a smooth function on the nodes of a graph.
For example, in semi-supervised learning with Graph Neural Networks (GCNs), the generalization error is specified as the average of the loss functions associated with all the nodes in a graph.
Before the elections, opinion polls are usually designed to represent the opinions of a networked population 
about the candidates in expectation \cite{lippmann2017public}. 
In social networks, having a small average distance to other individuals in the network is considered a key factor to be influential \cite{social}.
In many environmental monitoring, knowing average temperature, humidity and water quality of various regions allow for taking preventive actions against forest fires and water contamination \cite{krause2008efficient, yu2005real}.
Finally, in health care, monitoring various health measures such as a population's average blood pressure, weight, cholesterol level, allow for designing health strategies and disease prevention actions \cite{Steinerberger2018}. 

In real-world networks containing millions of nodes and billions of edges, it is impractical to evaluate the function at every single node. 
Therefore, an important question is how to select a small representative 
subset (coreset) of nodes from a million-node graph such that the weighted sum of function values sampled at the nodes of the subset can be a good estimate of the sum of function values over the entire graph~\cite{Ortega2016}.
Another constraint that often arises in practice is that evaluating the function may incur a cost that is not necessarily equal for all the nodes in the graph. For example, placing sensors in certain areas may be more expensive than others \cite{sensor00}. Similarly, measuring blood pressure of people in hard-to-reach areas is more expensive. Hence, we wish to find a small representative weighted subset of nodes subject to a limited budget. 

There are main challenges in finding such a small coreset.
First, the selected subset and the corresponding weights should provide a bound on the estimation error of the first moment (mean) of the function at all the nodes in the graph. 
Moreover, the method should be simple to implement, computationally inexpensive, and have theoretical guarantees relating coreset size to both computational complexity and the quality of approximation.
Finally, different nodes may have non-uniform cost, and hence we need to be able to bound the error of estimating the mean while finding a low-cost solution.

%
Very recently, the authors in \cite{Steinerberger2018} considered this problem and provided an upper-bound on the estimation error of the mean of the function evaluated over the entire graph with a weighted subset of functions at nodes of an arbitrary subset.
The provided quadrature-type bound can be used to bound any mean estimation problem in which the function is sufficiently low-frequency, i.e., the function can be expressed in terms of a small number of eigenfunctions (with large eigenvalue) of a lazy random walk transition matrix on the underlying graph. This includes problems such as measuring average blood pressure in a database \cite{Steinerberger2018} and subsampled kernel two sample testing \cite{Cloninger2018}.
Intuitively, the placements that minimize the quadrature-type bound have the property that the random walks starting from every node in the subset and weighted by its corresponding weight, overlap very little.
However, the problem of finding the near-optimal subset and the associated weights by minimizing the upper-bound has remained unaddressed.

In this work, we address the question of finding a coreset of nodes that minimizes the estimation error of the expected function value over the entire graph, by minimizing the upper-bound provided in \cite{Steinerberger2018}.
Inspired by the recent work of \cite{Campbell18_ICML} on Bayesian coreset constructions, we propose a greedy algorithm to find a small subset of nodes and their weights that closely approximate the first moment of the function over the entire graph.
Moreover, we consider an extended problem of having each sampled node come with a non-uniform cost, a problem that arises in applications such as sensor placement, marketing, and other knapsack type problems.  We extend our algorithm to this setting, and characterize through a simple parameter the error in estimating the mean of the function one is willing to tolerate in order to seek a low cost solution.  

The paper is organized as follows.
In Section \ref{framework}, we describe the mathematical framework of our problem. 
In Section \ref{optimization}, we frame the greedy optimization algorithm for selecting points and weights, both for equal cost of placement and when there is a placement cost associated.  In Section \ref{theory}, we prove bounds on the convergence of our algorithm and bound the mean error of a smooth function in terms of the algorithmically selected points.  In Section \ref{examples}, we demonstrate the success of our algorithm over random sampling and several benchmark unsupervised learning, semi-supervised learning, and clustering algorithms for a number of different applications.

\section{RELATED WORK}
The ever increasing size of modern datasets motivated data reduction techniques as a preprocessing step to speed up subsequent optimization problems.
Existing graph summarization methods mainly focus on obtaining sparse subgraphs that can be used to approximate properties of the original graph (degree distribution, size distribution of connected components, diameter, or community structure).
Core techniques include graph clustering or community detection methods \cite{fortunato2010community, lancichinetti2009community}, bit compression-based methods \cite{navlakha2008graph}, sparsification-based \cite{spielman2011graph} and sketching methods \cite{ahn2012graph, liberty2013simple}, and influence-based methods \cite{mathioudakis2011sparsification}.
While these methods maintain structural properties of the original graph, they cannot guarantee that an algorithm working on the summary provides a solution close to the solution found based on the entire data. For instance, graph summarization algorithms cannot guarantee that the function sampled at the selected points has similar statistics to the function evaluated on the entire network.

%
In contrast, coresets are weighted subsets of the data, which guarantee that for the specific problem at hand, models fitting the coreset also provide a good fit for the original dataset. 
This approach has been successfully applied to a variety of problems including $K$-means and $K$-median clustering \cite{har2004coresets}, mixture models \cite{lucic2017training}, 
low rank approximation \cite{cohen2017input}, spectral approximation \cite{agarwal2004approximating, li2013iterative}, Nystrom methods \cite{agarwal2004approximating, musco2017recursive},
and Bayesian inference \cite{Campbell18_ICML}.
Coreset construction methods traditionally perform importance sampling with respect to sensitivity score, defined as the importance of the point with respect to the objective function we wish to minimize, to provide high-probability solutions \cite{har2004coresets, lucic2017training, cohen2017input}. Greedy algorithms, which are special cases of the Frank-Wolfe algorithm, were described more recently to provide worst-case guarantees for problems such as support vector machines \cite{clarkson2010coresets}, and Bayesian inference \cite{Campbell18_ICML}.
In this work, we propose a greedy algorithm to construct coresets for estimating the first moment of a function defined on the nodes of a large graphs.

\section{GENERAL FRAMEWORK}\label{framework}
Here, we aim to choose a subset of vertices and weights to be the best representative of the graph. More specifically, we assume that the dataset $V$ have some geometric structure encoded in a graph $G = (V,E)$, where $V$ and $E$ denote the set of nodes and edges. The problem is how to choose a subset $S \subseteq V$ of vertices and weights $w_s > 0$ for every $s \in S$ in order to approximate the mean $\mu_f$ with a weighted approximation $\widehat{\mu}_f$, where
\begin{align}\label{means}
\mu_f = \frac{1}{{\left| V \right|}}\sum\limits_{v \in V} {f\left( v \right)}, && \widehat{\mu}_f = \sum\limits_{s \in S} {{w_s}f\left( s \right)}.
\end{align}
The graph can either be given a priori, or constructed on a point cloud $V\subset \mathbb{R}^d$ via a kernel $K:V\times V\rightarrow \mathbb{R}_+$.  We must assume that the function $f:V\rightarrow \mathbb{R}$ must have some relationship to the graph, or else the optimal sampling would be a random search.  In our context, this assumption takes the form that the function can be expressed in terms of a small number of eigenfunctions (with large eigenvalue) of a lazy walk graph transition matrix on the graph G.  
\begin{definition}
The lazy random walk graph~\footnote{A lazy random walk is a walk on a graph with self loops. Here the probability of taking self loop is
inversely proportional to the degree of the node.} transition matrix $P$ on $G = (V,E)$ is constructed by
\begin{align*}
P = \frac{1}{d_{max}} (A - D) + I, 
\end{align*}
where $A$ denote the (weighted) symmetric adjacency matrix of $G$ such that $A_{ij}=A_{ji}\geq0$ is the weight of the edge connecting nodes $i$ and $j$, $D$ is a diagonal matrix with $D_{ii} = \sum_j A_{ij}$, $d_{max} = \max_i D_{ii}$, and $I$ is the identity matrix.
\end{definition}
\begin{definition}\label{def:lowfreq}
A function $f:V\rightarrow \mathbb{R}$ is in $P_\lambda$ for a lazy walk transition matrix $P$, with eigendecomposition $P = U\Lambda U^*$, if 
\begin{align*}
f = \sum_{|\lambda_i|>\lambda} b_i U_i,
\end{align*}
where $\lambda_i$ and $U_i$ are the eigenvalues and eigenvectors of $P$, $b_i>0$ is a weight, and $0 \leq \lambda \leq 1$ is a parameter controlling the degree of smoothness.
$P_\lambda$ is the subspace spanned by the eigenfunctions of $P$ associated with eigenvalue $\ge \lambda$. 
\end{definition}


Spectral smoothness on graphs, as in Definition \ref{def:lowfreq}, is necessitated by the fact that there is no traditional notion of gradients on graphs. 
There’s a large body of work that shows that eigenfunctions of the kernel with large eigenvalue are of lower frequency than eigenvectors with small eigenvalue \cite{coifman2006diffusion, hammond2011wavelets}. Thus spectrally band-limited functions must themselves be smooth \cite{Steinerberger2018}. Our ability to approximate the mean of the function decays as the frequency of the function increases, since the function becomes more chaotic. This is a fundamental limitation, as functions unrelated to the underlying graph structure cannot be approximated with a coreset in any way better than random sampling.

We can also consider the additional constraint that choosing every nodes $v \in V$ may come with a cost $C:V\rightarrow\mathbb{R}_+$.  We seek an algorithm that will choose the subset of nodes $S \subseteq V$ in a way that is 
\setlist{nolistsep}
\begin{itemize}[noitemsep]
\item Greedy in order to quickly choose and add additional points,
\item Minimizes the error in estimation of the mean of $f$, and
\item Incorporates the cost $C(v)$ by choosing low-cost points to the subset.
\end{itemize}

The motivation for this framework and use of the lazy walk graph transition matrix comes out of the work in ~\cite{Steinerberger2018}, in which it was noted that the mean error in $f$ can be bounded in terms of the choice of points and weights, as in the following

\begin{align}
\label{scxx}
\begin{split}
\forall f \in {P_\lambda }, &
 \left| {\frac{1}{n}\sum\limits_{v \in V} {f\left( v \right) - \sum\limits_{s \in S} {{w_s}f\left( s \right)} } } \right| \le \\
& {\left\| f \right\|_{{P_\lambda }}}\mathop {\min }\limits_{\ell  \in N} \frac{1}{{{\lambda ^\ell }}}{\left( {\left\| {{{P}^\ell }\sum\limits_{s \in S} {{w_s}{\delta _s}} } \right\|_2^2 - \frac{1}{n}} \right)^{\frac{1}{2}}},
\end{split}
\end{align}
for $\ell \in \mathbb{N}$, $0<\lambda<1$, and $w_s$ summing to 1. Moreover, $n$ is the number of vertices of the graph, and $\delta_s$ is the binary vector taking value zero at every index except for $s$.
This result implies that minimizing the right hand side in terms of $w_s$ and $S$ will yield a stronger bound on the moment estimation of $f$.
$P^\ell \delta_s$ is the probability distribution of a random walker starting in $s$ after $\ell$ jumps.
Therefore, intuitively the subset $S$ that minimize the quadrature-type bound have the property that the random walks starting from every node $s$ in the subset and weighted by its corresponding weight $w_s$ overlap very little.  

\subsection{PROBLEM DEFINITION}
Similar to the concept of duality in convex optimization, in order to achieve best results for the upper bound in \eqref{scxx} we shall minimize it. Therefore, the optimization problem can be formulated in the following from
%

\begin{equation}
\begin{aligned}
& \underset{w}{\text{minimize}}
& & {\left\| f \right\|_{{P_\lambda }}}\mathop  \frac{1}{{{\lambda ^\ell }}}{\left( {\left\| {{{P}^\ell }\sum\limits_{s \in S} {{w_s}{\delta _s}} } \right\|_2^2 - \frac{1}{n}} \right)^{\frac{1}{2}}}  \\
& \text{subject to}
& & \left| S \right| \le K, \; S \subset V, \\
&&& \sum_{s \in S} w_s = 1, \quad w_s > 0. 
\label{opp}
\end{aligned}
\end{equation}
where $K$ is the maximum number of selected vertices. Several discrete and continuous methods 
are relevant when solving the preceding problem in~\eqref{opp}.  However, the main issue with such an optimization is the constraint of choosing $S\subset V$, which leads to a difficult combinatorial optimization problem. 
For example, one approach related to~\eqref{opp} is that of computing a cardinality constrained minimization by solving sparse principal component analysis (PCA) problem \cite{d'Aspremont}. 
However, solving the sparse PCA optimization problem entails semidefinite relaxation 
and a greedy algorithm to calculate a full set of good solutions,
which is very expensive with total complexity of $O(n^3)$ \cite{jordan0}.

A better way to view this problem is in terms of an $L_2$-minimization problem on $P$. Because $\|f\|_{P_\lambda}$ and $\lambda$ are properties of the function $f$ we're analyzing (see Def. \ref{def:lowfreq}) and independent of the choice of points and weights, we will drop these terms in discussion of the optimization scheme, when appropriate.  Similarly, we will drop the dependence on $\ell$ for notational simplicity and simply deal with an arbitrary lazy random walk matrix $P$.  This is not an issue as $P^\ell$ is also a lazy random walk transition matrix, with eigenvalues $\lambda^\ell$.

We can also rewrite our cost as a matrix multiplication $Pw = P\sum_s w_s \delta_s$, 
and define our target function to be the normalized ones vector $\frac{1}{n}\mathds{1}$.  We note that the cost function of \eqref{opp} can be reframed by the above notational changes, as well as 
bringing the $\frac{1}{n}$ inside the norm, to arrive at
\begin{align*}
    {\left\| {{{P} }\sum\limits_{s \in S} {{w_s}{\delta _s}} } \right\|_2^2 - \frac{1}{n}} 
    &= \left\| Pw \right\|_2^2 +\frac{1}{n} - 2\left\langle Pw, \frac{1}{n}\mathds{1} \right\rangle\\
    &=  \left\|Pw - \frac{1}{n}\mathds{1}\right\|_2^2,
\end{align*}
where the first equality comes from the fact that $\langle Pw,\mathds{1}\rangle = 1$, and the second equality comes from the fact that $\|\frac{1}{n}\mathds{1}\|_2^2 = \frac{1}{n}$ and completing the square.
To this end,~\eqref{opp} can be posed in an equivalent form as
%
\begin{equation}
\begin{aligned}
& \underset{w}{\text{minimize}}
& & { \left\|Pw - \frac{1}{n}\mathds{1}\right\|_2 } \\
& \text{subject to}
&& \sum\limits_i \mathds{1}[w_i > 0] \le K\\
&&& w_i \ge 0
\label{op4}
\end{aligned}
\end{equation}
Problem \eqref{op4} shifts the original mean bound in \eqref{opp} into a constrained least squares problem for finding the optimal weights $w$.  Due to the shifting of $\frac{1}{n}\mathds{1}$ inside the norm, we are no longer constrained to have $\sum_{i\in V} w_i=1$ (discussed in detail in Appendix \ref{thm2proof}).  With that being said, our algorithm still has a weight normalization $\beta^*$ that arises in \eqref{eq:beta} and will push $\|w\|_1$ close to 1.

The optimization problem \eqref{op4} we construct has been considered previously \cite{pilanci2012recovery, kyrillidis2013sparse}, however these have limitations in the context of our graph problem.
Briefly, \cite{pilanci2012recovery} considers cardinality regularized loss function minimization subject to simplex constraints, which yields a computational complexity $O(n^4)$ for our graph problem. Similarly, \cite{kyrillidis2013sparse} has theoretical guarantees only under the restricted isometry property \cite{candes2008restricted}. Moreover, it assumes the size of the subset $K$ is known, which may vary in an a posteriori fashion in our applications.  Finally, we extend the literature by providing guarantees on the convergence rate explicitly in terms of the number of selected elements and the rate at which the error decreases as we sample more coreset points.

We also note that the
Problem \eqref{op4} can be solved 
by relaxing
the nonconvex cardinality constraint $\sum_i \mathds{1}[w_i > 0] \le K$ 
to a simplex constraint $\sum_i \|P_i\| w_i = \sum_i \|P_i\| $ for the columns $P_i$ of the matrix $P$, and using the
Frank–Wolfe (FW) algorithm that iteratively chooses the point most aligned with the residual error.
However, there are some problems for which FW performs very poorly for any number of iterations because FW must scale the objective function in Problem \eqref{op4} suboptimally by $\sum_{i} w_i\| P_i \|$ rather than $\|  \frac{1}{n}\mathds{1}\|$, in order to maintain feasibility~\cite{Campbell18_ICML}. In this paper, we mainly focus on the above-mentioned constrained optimization problem. In the following section, we provide a new radial optimization algorithm for problem~\eqref{op4} and demonstrate that it yields theoretical guarantees at a significantly reduced computational cost. More importantly, in contrast to FW and its extensions~\cite{Jaggi2015}, the algorithm developed in this work has no correction steps and geometric error convergence.

\section{OPTIMIZATION ALGORITHM}\label{optimization}
The optimization problem in \eqref{op4}, despite involving a least squares cost function, is nonconvex in $w$ due to the cardinality constraint. Inspired from~\cite{Campbell18_ICML}, without any loss of generality, the weights, $w$ could be scaled by an arbitrary constant $\beta \ge 0$ without affecting feasibility. This motivates rewriting \eqref{op4} as
\begin{equation}
\begin{aligned}
& \underset{w, \beta}{\text{minimize}}
& & { \left\|\beta Pw -  \frac{1}{n}\mathds{1}\right\|_2 } \\
& \text{subject to}
&& \sum\limits_i \mathds{1}[w_i > 0] \le K\\
&&& w_i \ge 0, \beta \ge 0
\label{op44}
\end{aligned}
\end{equation}
Following~\cite{Campbell18_ICML} we now begin by solving the optimization problem in $\beta$. 
We define $\beta^*$ as the solution to the problem \eqref{op44} given $w$ which can be computed analytically as
\begin{equation}\label{eq:beta}
\begin{aligned}
{\beta ^ * } &= \frac{\|{ \frac{1}{n}\mathds{1}}\|}{\|{P w }\|}\max \left\{ {0,\left(\frac{Pw}{\|{P w }\|}\right)^T\left(\frac{ \frac{1}{n}\mathds{1}}{\| \frac{1}{n}\mathds{1}\|}\right)} \right\} \\
&= \frac{1}{n \|Pw\|} \max \left\{ 0,\left(\frac{Pw}{\|{P w }\|}\right)^T \mathds{1} \right\}.
\end{aligned}
\end{equation}
Substituting $\beta^*$ in the  objective above and expanding the square, the weighted subset of nodes (coreset) can be found by solving:
%
%
\begin{equation}
\footnotesize
\begin{aligned}
& \underset{w}{\text{minimize}}
& & \frac{1}{n}\left( 1 - \max \left\{ {0,\left(\frac{Pw}{\|{Pw }\|}\right)^T\mathds{1}} \right\} \right) \\
& \text{subject to}
&& \sum\limits_i \mathds{1}[w_i > 0] \le K\\
&&&  w_i \ge 0
\label{op443}
\end{aligned}
\end{equation}
This result shows that the minimum of the problem in \eqref{op443} occurs by alignment of the vectors $Pw$ and $\frac{1}{n}\mathds{1}$ independent of their norm. 
Finally, we define 
$P(w) = \sum_i w_i \frac{P_i}{\|P_i\|}$ , and ${P^*} = \frac{{\frac{1}{n}\mathds{1}}}{{\left\| \frac{1}{n}\mathds{1} \right\|}} = \frac{1}{\sqrt{n}}\mathds{1}$.
With that in mind, we can reformulate \eqref{op443} in an equivalent maximizing problem as in the following
\begin{equation}
\begin{aligned}
& \underset{w}{\text{maximize}}
& & P(w)^T{P^*} \\
& \text{subject to}
&& \sum\limits_i \mathds{1}[w_i > 0] \le K\\
&&&{\left\| P(w)  \right\| = 1}\\
&&&  w_i \ge 0
\label{op44s3}
\end{aligned}
\end{equation}
According to the constraints in \eqref{op44s3}, we are optimizing the objective over a unit hypersphere rather than the simplex. Before solving the problem in \eqref{op44s3}, we extend it to a more general case where nodes have different selections costs, and then we provide a new algorithm for solving it.

\subsection{OPTIMIZATION WITH SELECTION COST} 

In many applications, selecting some reference points representing the whole data involve some factors such as cost of selection associated to each data. 
For example, placing sensors in certain regions may be more expensive than others. Similarly, measuring blood pressure of  people in hard-to-reach areas is more  expensive.
In problems such as these, it is natural to seek a trade-off between the two goals of minimizing the error (selecting nodes $S \subseteq V$ that maximize alignment of $P(w)$ and $P^*$) and the cost of choosing nodes in $S$. To this end, we incorporate a new parameter $C$ into the problem controlling the cost associated with each node. In what follows, we will focus on the reparameterized maximization problem, which can be written as:
%
%
\begin{equation}
\begin{aligned}
& \underset{w}{\text{maximize}}
& & P(w)^T{P^*} - \lambda C(S) \\
& \text{subject to}
&& |S|\le K \textnormal{ for } S=\{i:w_i>0\}\\
&&&{\left\|P(w)\right\| = 1}\\
&&& w_i \ge 0
\end{aligned}
\label{opt:giga}
\end{equation}

Now we provide a greedy algorithm, sample cost greedy iterative geodesic ascent (SCGIGA), for the above optimization problem.
At every iteration, the algorithm finds the point indexed by $v^*$ for which the geodesic between $P(w)$ and $P(v^*)$ is most aligned with the geodesic between $P(w)$ and $P^*$.
We then find the set of all vertices for which the alignment is within $\kappa$-percent of the alignment of $v^*$, for a parameter $0< \kappa \leq 1$. Among such points, we add the vertex with minimum cost to the solution.
Once the point has been added, the algorithm reweights the coreset and iterates. 
SCGIGA detailed in Algorithm \ref{alg:GIGA} outlines how to solve the optimization problem in~\eqref{opt:giga}. It is noteworthy that SCGIGA is a general algorithm which is also valid for the case where there are equal costs associated with every data point. This corresponds to $\kappa=1$ in our settings. 

A benefit of the greedy approach is that increasing the number of coreset points to $K+1$ simply requires adding the next point for geodesic ascent and marginally changing the weights.
This is unlike \cite{kyrillidis2013sparse}, in which changing to the $K+1$ simplex requires recomputing the optimization scheme and in no way guarantees keeping the previous $K$ selected data points.  Similarly, this formulation allows us to easily incorporate nonuniform cost of sampling points.

We note that the most expensive computation in Algorithm \ref{alg:GIGA} is $\langle P_v, P(w) \rangle$ across $v$, but that $P(w)$ is the sum of at most $K$ vectors and each loop only adds one additional vector. Thus we can store previous inner products and only take different weighted combinations on each iteration, so each loop only requires $n$ inner products. Hence, Algorithm \ref{alg:GIGA} has average complexity $O(Knm)$, where $m$ is the average sparsity of a column.

We will establish the convergence guarantees and rate in Section \ref{theory}, and connect this convergence to bounding the estimate of the mean of $f$ as in  \eqref{scxx}.  But first, we wish to establish that incorporating cost of sampling into the optimization does not significantly impact the resulting minimum value, and thus results in a close to optimal greedy bound on the error in estimating the mean of $f$.  The gap can be characterized by the one parameter $\kappa$ to be chosen by the user, and choosing the minimal cost point among the set of vertices similar to $v^*$ only scales the resulting bound by a factor of $\kappa$.

\begin{theorem}\label{thm:relaxedopt}
Let $C^k_{max}$ be the sum of the $k$ largest elements of $C$.  If we choose $\lambda \le  \frac{1-\kappa}{C^k_{max} \min\|P_i\| \sqrt{n}}$, then the solution to \eqref{opt:giga}, $P(w^*)$, satisfies
\begin{align*}
    P(w^*)^T P^* \ge \kappa \max_w P(w)^T P^*.
\end{align*}
\end{theorem}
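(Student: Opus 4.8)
The plan is to prove this by a standard sub-optimality comparison between the penalized and unpenalized problems, with the only nontrivial ingredient being a lower bound on the cost-free optimum that exploits the structure of $P$. First I would fix notation: let $\hat w$ denote a maximizer of the cost-free objective, so that $P_{\hat w}^T P_* = \max_w P_w^T P_*$ over the feasible set (cardinality $\le k$, $\|P(w)\|=1$, $w_i\ge 0$), and let $w^*$ with support $S^*$ denote the solution of the penalized problem \eqref{opt:giga}. Since $\hat w$ is itself feasible for \eqref{opt:giga}, optimality of $w^*$ gives
\[
P_{w^*}^T P_* - \lambda C(S^*) \ge P_{\hat w}^T P_* - \lambda C(\hat S).
\]
Dropping the nonnegative term $\lambda C(S^*)\ge 0$ on the left and bounding $C(\hat S)\le C^k_{max}$ — valid because $|\hat S|\le k$ and $C^k_{max}$ is the sum of the $k$ largest entries of $C$ — yields the clean intermediate estimate $P_{w^*}^T P_* \ge P_{\hat w}^T P_* - \lambda C^k_{max}$.

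The key step is to lower bound $P_{\hat w}^T P_*$ by a concrete quantity. Here I would use the single-vertex candidate $w = e_i/\|P_i\|$, which is feasible (cardinality one, unit norm) and has $P_w = P_i/\|P_i\|$. The crucial structural observation is that $P = I - \tfrac{1}{d_{max}}L$ with $L = D-W$ symmetric, so every column of $P$ sums to one, i.e. $\mathbbm{1}^T P = \mathbbm{1}^T$. Since $P^* = \tfrac{1}{n}\mathbbm{1}$ gives $P_* = \tfrac{1}{\sqrt n}\mathbbm{1}$, this forces $P_i^T P_* = \tfrac{1}{\sqrt n}$ for every $i$, hence $P_w^T P_* = \tfrac{1}{\|P_i\|\sqrt n}$. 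Maximizing over $i$ produces the baseline
\[
P_{\hat w}^T P_* \ge \frac{1}{\min_i \|P_i\|\, \sqrt n}.
\]

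Finally I would combine the two displays. Under the hypothesis $\lambda \le (1-\kappa)/(C^k_{max}\min_i\|P_i\|\sqrt n)$ we get $\lambda C^k_{max} \le (1-\kappa)/(\min_i\|P_i\|\sqrt n) \le (1-\kappa)\,P_{\hat w}^T P_*$, so that $P_{w^*}^T P_* \ge P_{\hat w}^T P_* - (1-\kappa)P_{\hat w}^T P_* = \kappa\, P_{\hat w}^T P_* = \kappa \max_w P_w^T P_*$, which is exactly the claim. I expect the second step to be the main obstacle: identifying and justifying the baseline correlation $1/\sqrt n$ attainable by any single node is what ties the geometry of $P$ (column sums equal to one, coming from the Laplacian) to the target $\tfrac{1}{n}\mathbbm{1}$, and it is precisely this computation that explains why the otherwise mysterious constant $\min_i\|P_i\|\sqrt n$ appears in the threshold on $\lambda$. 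The two optimality inequalities bracketing it are routine.
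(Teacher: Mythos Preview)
Your proposal is correct and follows essentially the same route as the paper: both hinge on (i) the column-sum-one property of $P$ (the paper phrases this as ``$P$ is bistochastic'') to get the baseline $\max_w P_w^T P_* \ge 1/(\min_i\|P_i\|\sqrt{n})$, (ii) the bound $C(S)\le C^k_{max}$, and (iii) the threshold on $\lambda$ to absorb the penalty as a $(1-\kappa)$ fraction of the unpenalized optimum. Your explicit two-step optimality comparison (feasibility of $\hat w$ for the penalized problem, then dropping $\lambda C(S^*)\ge 0$) is in fact a cleaner articulation than the paper's somewhat compressed chain of inequalities.
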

The proof can be found in the Appendix. 

Theorem \eqref{thm:relaxedopt} implies we will never incur too much loss to the original objective by incorporating cost of selecting the nodes that minimize the error.  Similarly, this implies that we can make every greedy choice and step in whatever fashion is deemed best for cost, as long as the choice is within $\kappa$ of the optimal step direction.

\begin{algorithm}[t]
	\begin{algorithmic}[1]   
        \State  Initialization$~{w_0} \leftarrow 0$
        \State $\forall v\in V,$ $P_v\leftarrow \frac{P_v}{\|P_v\|} $ 
        \For{$k \in \left\{ {0,...,K} \right\}$}
             \State ${a_k} \leftarrow \frac{\mathds{1} - \langle \mathds{1},P(w_k) \rangle P(w_k)}{\left\| \mathds{1}- \langle \mathds{1},P(w_k) \rangle P(w_k) \right\|}$
             \State $\forall v \in V,{a_{kv}} \leftarrow \frac{P_v - \langle P_v,P(w_k) \rangle P(w_k)}{{\left\| P_v - \langle P_v,P(w_k) \rangle P(w_k) \right\|}}$ ~~~
            \State ${v^*}  \leftarrow  \arg {\max _{v \in V}}{a_k}^T{a_{kv}}$ ~~~~~~
            \State \Comment{find vertex that maximizes alignment}
            \State $W \leftarrow \left\{ {v \in V|{a_k}^T{a_{kv}} \ge \kappa {a_k}^T{a_{k{v^ * }}}} \right\}$ 
            \State \Comment{  find vertices within $\kappa$-percent of max }
            \State ${v_k} \leftarrow \arg {\min _{v \in W}}C_v$ 
            \State \Comment{  find vertex in $W$ with min cost } 
            \State $\zeta_0 = \langle \frac{1}{\sqrt{n}} \mathds{1}, P_{v_k} \rangle$
            \State $\zeta_1 = \langle \frac{1}{\sqrt{n}} \mathds{1}, P(w_k) \rangle$
            \State $\zeta_2 = \langle P_{v_k}, P(w_k) \rangle$
            \State ${\delta_k} \leftarrow \frac{\zeta_0 - \zeta_1 \zeta_2}{(\zeta_0 - \zeta_1 \zeta_2) + (\zeta_1 - \zeta_0 \zeta_2)}$
            \State \Comment{choose the step size}
            \State ${w_{k + 1}} \leftarrow \frac{{\left( {1 - {\delta _k}} \right){w_k} + {\delta _k}1_{{v_k}}^{}}}{{\left\| {\left( {1 - {\delta _k}} \right)P({w_k}) + {\delta _k}P_{{v_k}}^{}} \right\|}}$
            \State \Comment{  update the weight }
            \State \textbf{end}     
    \EndFor
    \State $w = \beta w_k$ \Comment{Scale weights by $\beta$}
    \State $S = \{v \in V| w_v > 0\}$
    \State Sample $f$ at nodes in $S$
    \State $\widehat{\mu}_f = \sum_{v\in S} w_v f(v)$
     \State \textbf{return} $\widehat{\mu}_f$, $w$
\end{algorithmic}
\caption{Algorithm of SCGIGA}
 \label{alg:GIGA}
\end{algorithm}


\section{THEORETICAL ASPECTS}\label{theory}

Here we examine the guarantees that Algorithm \ref{alg:GIGA} yields for bounding the error in estimating the mean of $f\in P_\lambda$, where $P_\lambda$ is the subspace spanned by the eigenfunctions of $P$ associated with eigenvalues $\ge \lambda$. 
We will derive the general theorem for arbitrary $\kappa$, and as a special case we recover the results when all the nodes have equal selection cost ($\kappa=1$). 


\begin{theorem}\label{thm:guarantees}
Let $f\in P_\lambda$ with mean $\mu_f$ as in \eqref{means}, and assume there is a cost for selecting every node $v \in V$, i.e., $C(v):V\rightarrow \mathbb{R}_+$ and a slack parameter $\kappa$. If we choose the set of points $S$ and weights $w_s$  using Algorithm \ref{alg:GIGA} such that $|S|=K$, then
\begin{align*}
    \left| \mu_f - \sum\limits_{s \in S} {{w_s}f\left( s \right)}  \right| \le  \frac{\left\| f \right\|_{{P_\lambda }}}{\lambda^\ell}  \frac{ \mathop \eta\mathop v_K}{\sqrt{n}},
\end{align*}
where $v_K = O((1 - \kappa^2 \epsilon^2)^{K/2})$ for some $\epsilon$ and $\eta = \sqrt{1-\kappa^2\max_{i\in V}\left\langle \frac{P_i}{\|P_i\|},\frac{1}{\sqrt{n}}\mathds{1}\right\rangle^2}$.
\end{theorem}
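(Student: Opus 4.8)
The plan is to combine the spectral bound \eqref{scxx} with a geodesic-ascent convergence analysis of Algorithm~\ref{alg:GIGA}, so that the whole right-hand side collapses onto the Euclidean residual $\|P(w)-P^*\|_2$ that the algorithm actually drives down. First I would invoke \eqref{scxx} to replace the mean error by $\frac{\|f\|_{P_\lambda}}{\lambda^\ell}\bigl(\|P^\ell a\|_2^2 - \tfrac1n\bigr)^{1/2}$. The reduction hinges on two structural facts about $P = I - \frac{1}{d_{max}}(D-W)$: it is symmetric and doubly stochastic, and $\mathbbm{1}$ is an eigenvector with eigenvalue $1$ (since $(D-W)\mathbbm{1}=0$), so every power fixes the target, $P^\ell P^* = P^* = \frac1n\mathbbm{1}$. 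Taking the weights on the simplex gives $\langle P^\ell a,\mathbbm{1}\rangle = \sum_i a_i = 1$, hence $\|P^\ell a\|_2^2 - \tfrac1n = \|P^\ell a - P^*\|_2^2 = \|P^{\ell-1}(P(a)-P^*)\|_2^2 \le \|P(a)-P^*\|_2^2$, because $P$ is a contraction on $\mathbbm{1}^\perp$. This supplies the factor $\frac{1}{\lambda^\ell}$ and reduces everything to bounding the residual $\|P(a)-P^*\|_2$.

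Next I would analyze the iteration as greedy geodesic ascent for the reparametrized problem \eqref{op44s3}, tracking the angle $\theta_t$ with $\cos\theta_t = P_{w_t}^T P_*$. The central step is to derive the multiplicative error recursion $\sin^2\theta_{t+1} = \sin^2\theta_t\,\bigl(1 - (a_t^T a_{t,x_t})^2\bigr)$, obtained by computing the optimal geodesic step toward the chosen normalized column $x_t$ using the closed-form step size $\delta_k$ of the algorithm; here $a_t$ is the tangent residual direction at $P_{w_t}$ and $a_{t,x_t}$ the tangent direction of $x_t$. Writing $s_t = \sin\theta_t$, the best nonnegative rescaling of $P(w_K)$ has residual $\|P^*\|\, s_K = s_K/\sqrt{n}$, so it suffices to show $s_K \le \eta\, v_K$.

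Then I would feed in the selection rule of Algorithm~\ref{alg:GIGA}: the chosen vertex $v_k$ lies in the set $S$ of vertices within a $\kappa$-fraction of the maximal alignment, so $a_t^T a_{t,x_t} \ge \kappa\max_v a_t^T a_{t,v}$ (this is exactly the content of Theorem~\ref{thm:relaxedopt}). If the best achievable alignment is bounded below by some $\epsilon>0$ at every iterate, each factor obeys $1 - (a_t^T a_{t,x_t})^2 \le 1-\kappa^2\epsilon^2$, and unrolling from $s_0=1$ gives $s_K^2 \le \prod_t (1-\kappa^2\epsilon^2)$. Separating the first step, where $P_{w_0}=0$ forces $a_0 = P_*$ and $\cos\theta_1 = P_*^T x_{v_1} \ge \kappa\max_i\langle P_i/\|P_i\|,\tfrac{1}{\sqrt n}\mathbbm{1}\rangle$, yields $s_1 \le \eta$ precisely, while the remaining $K-1$ geometric factors give $v_K = O\bigl((1-\kappa^2\epsilon^2)^{K/2}\bigr)$. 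Combining $s_K \le \eta\, v_K$ with the residual identity and the reduction of the first paragraph produces the claimed bound, and $\kappa=1$ recovers the placement-cost-free case.

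The hard part will be the uniform alignment bound: proving $\max_v a_t^T a_{t,v} \ge \epsilon$ at every iteration, which is what upgrades mere monotone decrease to geometric decay. This is the delicate ingredient in any GIGA-type argument, since it fails whenever the residual becomes nearly orthogonal to all data directions; I would establish it by arguing that $P_*$ stays in the interior of the cone generated by the normalized columns $\{P_i/\|P_i\|\}$, with $\epsilon$ governed by a restricted-conditioning/covering constant of that cone. A secondary technical point is reconciling the $L_2$-optimal scaling $\beta$ with the simplex normalization needed to invoke \eqref{scxx}; I would handle this by carrying the $\|P^*\| = 1/\sqrt n$ factor explicitly and checking that the residual bound survives renormalization up to constants.
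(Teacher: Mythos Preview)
Your overall architecture matches the paper's: reduce the mean error to a residual $\|\beta P^\ell(w)-P^*\|$ via the Steinerberger bound, then control that residual by the geodesic-ascent recursion $J_{k+1}=J_k\bigl(1-\langle a_k,a_{k,v_k}\rangle^2\bigr)$ with the $\kappa$-slack from the cost-aware selection rule, and peel off the first step to produce $\eta$. Two minor deviations are harmless: the paper runs the GIGA analysis on $\widetilde P=P^\ell$ directly rather than invoking contractivity of $P$ on $\mathbbm{1}^\perp$, and it drops the simplex constraint $\sum_w a_w=1$ altogether, working with the $\beta$-scaled residual $\|\beta P(w)-P^*\|$ from the outset rather than passing through \eqref{scxx} in its expanded form.

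The substantive divergence is exactly the step you flag as hard. You propose to show a \emph{uniform} lower bound $\max_v\langle a_t,a_{t,v}\rangle\ge\epsilon$ for all $t$ via a cone-interior argument. The paper does \emph{not} establish such a bound, and in the GIGA framework it is generally false: the optimal tangent alignment can tend to $0$ as $J_t\to 0$. Instead the paper imports from \citep{Campbell18_ICML} the two-regime estimate
\[
\langle a_k,a_{k,v_k}\rangle \;\ge\; \kappa\tau\sqrt{J_k}\ \vee\ f(J_k),
\]
where $f$ is an explicit function with $f(x)\to\kappa\epsilon$ as $x\to 0$. The first branch yields only the polynomial bound $J_k\le J_1/\bigl(1+\kappa^2\tau^2(k-1)\bigr)$; once this drives $J_k$ below a threshold $k^*$, the second branch takes over and produces the geometric factor $\prod_{s>k^*}\bigl(1-f^2(B(s))\bigr)$, whence the asymptotic rate $(1-\kappa^2\epsilon^2)^{K/2}$. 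So the $v_K=O\bigl((1-\kappa^2\epsilon^2)^{K/2}\bigr)$ in the statement hides a polynomial prefactor and a burn-in index $k^*$; your uniform-$\epsilon$ route would, if it worked, give the cleaner bound but is not what the paper proves and would require an additional structural hypothesis on $P$ that is neither stated nor needed.
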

The proof can be found in the Appendix. 

The main ideas of this theorem are three-fold.  The first series of lemmas that must be proved are extensions of the core lemmas in \cite{Campbell18_ICML}, which establish that the error is a contractive map after each update, which is later used in a fixed point theorem to show convergence and rate.  The extensions we make here are: 1) generalize their results to graphical geometries, rather than the log-likelihood construction proposed in \cite{Campbell18_ICML}, and 2) allow for a $\kappa$ relaxation of the greedy choice and prove how this relaxation affects the contractive mapping.  

The second main idea behind this theorem is to show that the $\kappa$-relaxation does not significantly affect the fixed point argument to show convergence of the cost-aware greedy algorithm as the number of chosen points grows.  This again borrows from \cite{Campbell18_ICML} in this more general setting, but the argument effectively comes down to propagating the additional error accrued by the cost-aware algorithm.

The final main idea behind this theorem is connecting the coreset choice to the first moment quadrature bound in \cite{Steinerberger2018}.  This comes from a shifting of the bound in \eqref{scxx} to allow for weights to not necessarily have to sum to $1$, and the recognition that this bound for general points and weights chosen is equivalent to the bound being minimized in \eqref{op4}.

We wish to note that the bound established in Theorem \ref{thm:guarantees} may not be sharp for the first few points greedily sampled, but does become sharp asymptotically due to fixed point convergence.  The bound on the first moment estimate of $f$ in terms of \eqref{scxx} is close to sharp if there is a spectral gap at $\lambda$ \cite{Steinerberger2018}.

\begin{figure*}[!t]
    \centering
    \begin{tabular}{cc}
          \includegraphics[width=.26\pdfpagewidth,height=.15\textwidth]{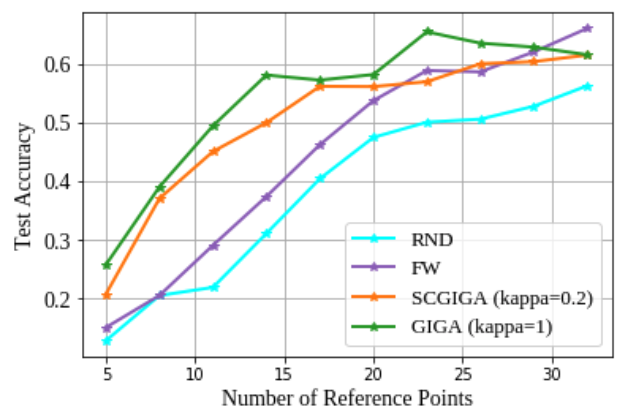} & \hspace{5mm}
          \includegraphics[width=.26\pdfpagewidth,height=.15\textwidth]{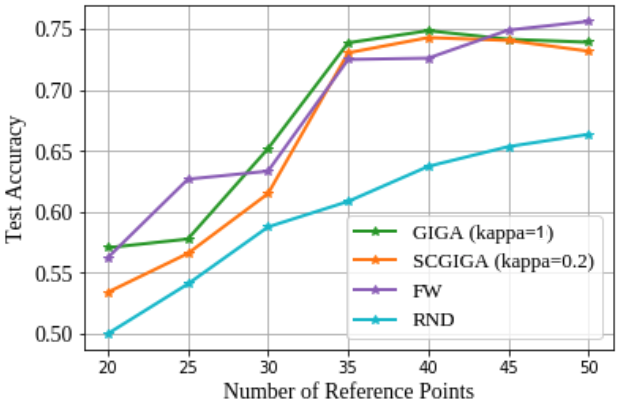} \\
          Stochastic block model & Cora citation dataset
    \end{tabular}
\vspace{-2mm}
    \caption{Classification accuracy obtained from a GCN model after a number of semi-supervised training iterations for different algorithms.}\label{fig:gcn}
\vspace{-1mm}
\end{figure*}

As a particular case of this theorem, when we always choose the optimal greedy node independent of cost, we recover the following guarantee.

\begin{corollary}
Let $f\in P_\lambda$, and choose the set of points $S$ and weights $w_s$  using Algorithm \ref{alg:GIGA} such that $|S|=K$.  Then
\begin{align*}
    \left| \mu_f - \sum\limits_{s \in S} {{w_s}f\left( s \right)} \right| \le \frac{\left\| f \right\|_{{P_\lambda }}}{\lambda^\ell}  \frac{\eta\mathop v_K}{\sqrt{n}},
\end{align*}
where $v_K = O((1 -  \epsilon^2)^{K/2})$ for some $\epsilon$ and $\eta = \sqrt{1-\max_{i\in V}\left\langle \frac{P_i}{\|P_i\|},\frac{1}{\sqrt{n}}\mathds{1}\right\rangle^2}$.
\end{corollary}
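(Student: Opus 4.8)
The plan is to obtain this Corollary as the $\kappa = 1$ specialization of Theorem~\ref{thm:guarantees}. Setting $\kappa = 1$ means that in Algorithm~\ref{alg:GIGA} the set $S$ always collapses to the single maximizing vertex $v^*$, so the cost term plays no role and at every iteration we take the greedy step of maximal alignment. Substituting $\kappa=1$ directly into the theorem's conclusion turns the per-step contraction factor $(1-\kappa^2\epsilon^2)$ into $(1-\epsilon^2)$, giving $v_K = O((1-\epsilon^2)^{K/2})$, and turns the first-step constant into $\eta = \sqrt{1-\max_{i\in V}\langle P_i/\|P_i\|, \mathbbm{1}/\sqrt{n}\rangle^2}$, which is exactly the stated bound. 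So at the level of bookkeeping the Corollary is immediate once Theorem~\ref{thm:guarantees} is in hand.

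For a self-contained derivation I would retrace the theorem's argument specialized to $\kappa=1$. I would start from the Steinerberger-type inequality \eqref{scxx}, which already reduces the mean-estimation error to controlling $\big(\|P^\ell \sum_{w\in W} a_w \delta_w\|_2^2 - \tfrac1n\big)^{1/2}$ for the chosen $\ell$. Writing $\mu_W = \sum_{w\in W} a_w\delta_w$ and using that $P$ is symmetric and doubly stochastic (so $P^\ell\mathbbm{1}=\mathbbm{1}$ and $\|\tfrac1n\mathbbm{1}\|^2 = \tfrac1n$), this residual is exactly the squared distance from $P^\ell\mu_W$ to the target $P^* = \tfrac1n\mathbbm{1}$ that Algorithm~\ref{alg:GIGA} minimizes after the final scaling by $\beta$. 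Hence it suffices to track the angle $\theta_K$ between the iterate $P_{w_K}$ and $P_*$, since after optimal scaling the residual equals $\|P^*\|\sin\theta_K = \tfrac{1}{\sqrt n}\sin\theta_K$.

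The substantive step is the geometric convergence of the greedy geodesic ascent. I would show that the algorithm's step size $\delta_k$ realizes the one-step contraction $\sin^2\theta_{k+1} \le \sin^2\theta_k\,(1 - \tau_k^2)$, where $\tau_k = \max_{v} a_k^T a_{kv}$ is the best attainable alignment of the residual direction $a_k$ with an available atom (with $\kappa=1$ we actually attain this maximum). After the first iteration the residual is $\sin^2\theta_1 = 1 - \max_{v}\langle P_v/\|P_v\|, P_*\rangle^2 = \eta^2$, which supplies the constant $\eta$; iterating the contraction over the remaining steps with a uniform lower bound $\tau_k \ge \epsilon$ yields $\sin\theta_K \le \eta\,(1-\epsilon^2)^{(K-1)/2} = \eta\, v_K$. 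Combining this with \eqref{scxx} gives the claimed inequality.

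The main obstacle, as in the $\kappa$-general theorem, is justifying the uniform per-step alignment bound $\tau_k\ge\epsilon$ and verifying that the particular geodesic step $\delta_k$ of Algorithm~\ref{alg:GIGA} indeed achieves the multiplicative decrease; these are the geometric heart of the GIGA-type analysis borrowed from \citep{Campbell18_ICML} and \citep{Cloninger2018}. Everything else—the reduction via \eqref{scxx}, the identification of the residual with $\tfrac{1}{\sqrt n}\sin\theta_K$, and the emergence of $\eta$ from the first step—is bookkeeping that simplifies under $\kappa=1$ precisely because no alignment is ever sacrificed to reduce cost.
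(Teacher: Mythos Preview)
Your proposal is correct and matches the paper's approach exactly: the paper's entire proof is the one-line observation that the Corollary is the $\kappa=1$ specialization of Theorem~\ref{thm:guarantees}. Your additional self-contained sketch simply retraces the proof of Theorem~\ref{thm:guarantees} (via the Steinerberger bound and the GIGA-type geometric contraction from \citep{Campbell18_ICML}) at $\kappa=1$, which is consistent with---though more detailed than---what the paper provides.
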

The proof of this corollary is a special application of Theorem \ref{thm:guarantees}, which is proved in the Appendix.

\section{EMPIRICAL EVIDENCE}\label{examples}

In this section, we evaluate our model on several sets of experiments. In order to compare both cases of our algorithm, i.e., when there are non-uniform costs on selecting different data points $(\kappa \ne 1)$ and when there are equal costs associated with every data point $(\kappa=1)$ we consider a fixed cost on each data randomly generated from a uniform distribution over $[0, 1)$. 
Similarly, in all examples involving point clouds, the generated graph comes from a $K$-nearest neighbor construction with $10$ nearest neighbors.

\subsection{GRAPH CNN CLASSIFICATION}\vspace{-2mm}

One example of the importance of bounding means comes in semi-supervised learning.  In such a problem, the goal is to approximate a function $h: V\to\mathbb{R}^m$ using a parametric model $h_\theta$ (such as the graph convolutional network).  The goal is to minimize 
$$L_{\theta,V}=\frac{1}{|V|}\sum_{v\in V} \|h_\theta(v)- h(v)\|^2,$$
where $L$ is a loss function, chosen judiciously depending upon the application.
However, in the event that sampling $h$ is expensive, the goal is to choose a coreset $S\subset V$ in order to estimate $L_{\theta,V}$ with a quadrature formula and the empirical risk
$$L_{\theta,S}=\sum_{v\in S}w_v \|h_\theta(v)- h(v)\|^2. $$
The quadrature error in this context, $|L_{\theta,S} - L_{\theta,V}|$, is exactly the \emph{generalization error}, and is now re-expressed as the estimation of the mean of the function $f(v) = \|h_\theta(v)- h(v)\|^2$.  Theorem \ref{thm:guarantees} applies in the situation that $h\in P_\lambda$ and that $h_\theta$ is sufficiently regularized to satisfy $h_\theta \in P_\lambda$.  We note that the $\|\cdot \|^2$ remains mostly low-frequency as the frequency of pointwise product of eigenfunctions can be bounded \cite{lu2019approximating}.
This implies that sampling $h$ only at $S$, and training $h_\theta$ on $S$, one can bound the generalization error on $V\setminus S$ using Theorem \ref{thm:guarantees}.  
 
Given the above discussion, we seek to test our algorithm on semi-supervised document classification in citation networks datasets, namely, Cora and semi-supervised node classification in a stochastic block model graph with 200 vertices and 10 clusters.  We train a two-layer graph convolutional neural network (GCN) as described in~\cite{gcn} and we follow the same pre-processing techniques as well. We compare against the same baseline methods as in the experiments presented in our paper tested on Cora dataset which is a real citation network dataset with $2,708$ nodes and $5,429$ edges as well as a stochastic cluster-based graph datasets.

A GCN takes a feature matrix and an adjacency matrix as inputs and for every vertex of the graph produces a vector, whose elements correspond to the score of belonging to different classes. 
An identity matrix is added to the original adjacency matrix in order to  enforce self loops.

\begin{figure*}
\centering
\begin{tabular}{ccc}
\includegraphics[width=.28\textwidth,height=.15\textwidth]{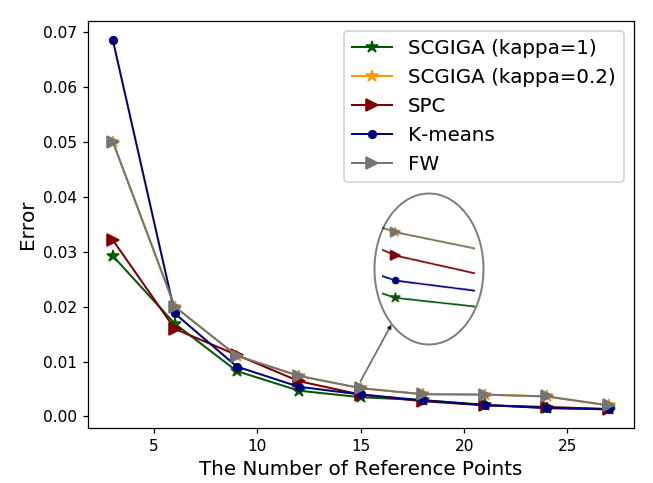} &
\includegraphics[width=.28\textwidth,height=.15\textwidth]{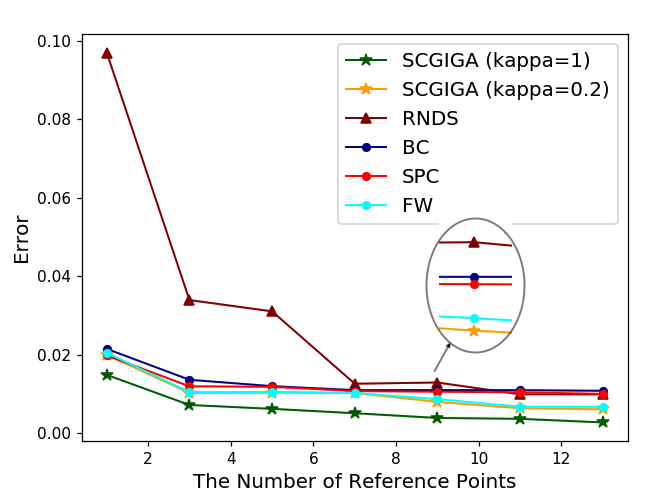} &
\includegraphics[width=.28\textwidth,height=.15\textwidth]{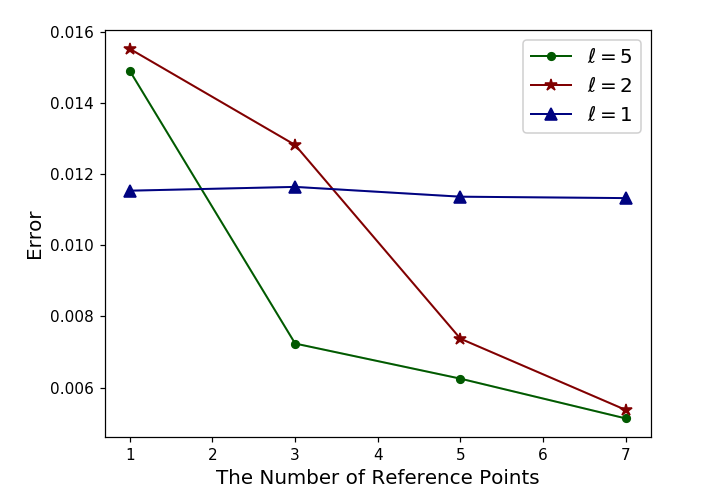} \\
Gaussian model & Stochastic Block Model & Stochastic Block Model for $\ell$
\end{tabular}
%
%
%
%
    \caption{The error comparison of estimating the mean of the function defined on the clusters (the two left ones), and the impact of the shaping parameter $\ell$ on the performance (Right one).}\label{fig:dd}
\end{figure*}

The neural network is trained in a semi-supervised setting, where the network is fed with the feature and adjacency matrices of the entire graph while the loss is only computed on the labeled vertices. Here the labeled vertices are the subset of vertices that are picked by the proposed method on the normalized adjacency matrix. We train both datasets for a maximum of 100 epochs using Adam~\cite{adam2015} with a learning rate of 0.01 and early stopping with a window size of 10, i.e. we stop training if the validation loss does not decrease for 10 consecutive epochs (as was done in \cite{gcn}). The results are summarized in Figure~\ref{fig:gcn}. 
Despite small fluctuations due to random initialization, as expected, the test accuracy tends to increase as more labeled points are utilized for training. Further, as can be seen from the figure our proposed algorithm, SCGIGA, has superior performance
in selecting the subset of data that comprises the most representative points of clusters. Lastly, because of the existence of outliers in a random graph, the accuracy of the proposed algorithm starts to improve slowly at about 60\% accuracy. However, we note that the model is trained with only 10\% of data which was considered to be the labeled ones, so this also implicitly suggests that our algorithm successfully picks out the most informative nodes.

\subsection{MEAN FUNCTION ON CLUSTERED DATA}

A standard unsupervised learning task is to learn clusters from data, either on a graph or a point cloud, and use those clusters to select points and weights for averaging a function. Standard clustering algorithms include $K$-means clustering and spectral clustering~\footnote{In multivariate statistics spectral clustering techniques gets rid of some of the eigenvalues of the similarity matrix of the data to perform nonlinear dimensionality reduction before clustering in fewer dimensions.}. 

The first experiment we run is on Gaussian model with three components. The components have the mean vectors of $[1~-3], [-3~~2], [3~~0]$, and all have the covariance matrix of the identity matrix, $I$. The components contain $20\%$, $30\%$, and $50\%$ of the data.  The function we choose to model is a simple smooth function that is an indicator function of the small cluster (1 on the small cluster, 0 on other clusters).
The results in Fig.~\ref{fig:dd} for the Gaussian Model show the error comparison of three algorithms including FW, $K$-means and spectral clustering (SPC) versus the number of clusters (centroids) or reference points. The performance of the algorithms is quantified by an error metric which is defined as the ${\rm{Err}} = {\left| {\sum_{\left| S \right| = k} {{a_s}f\left( S \right) - \mu_f} } \right|^2}.$   For fairness to comparable algortihms, $a_s$ in $K$-means, FW and SPC is defined as the ratio of the data in each cluster to the whole data, while $a_s=w_s$ in our algorithm.  Here, $f$ is the the indicator function on the small cluster and $\mu_f$ is the mean of the function. 

As is evident from the figure, our algorithm outperforms $K$-means, FW and spectral clustering for the whole range of the number of reference points. As can bee seen in this figure the maximum number of the reference data is $14$ out of $10000$ and our results reveal that the cost of the optimal solution ($C_{\rm{COS}}$) is $5.920$, while the cost we got with our cost aware algorithm, i.e., the cost of sub-optimal solution ($C_{\rm{CSO}}$) is $0.106$.
The more surprising observation in these figures is that even in the case where a fixed cost associated with each data in which results in our algorithm to yield a sub-optimal solution (the solution which is in the $\kappa$ percent of the optimal) our algorithm continues to do very well and work better than standard algorithms.  Also note that for only 3 reference points, which would lead to an ideal coreset of one point per cluster, SCGIGA considerably reduces the error compared to the competing algorithms.

We consider another experiment on clustered graphical data.
In Fig.~\ref{fig:dd} a stochastic block model with three clusters is designed where the first cluster contained $10\%$ of the data and the other two clusters contain $50\%$ and $40\%$ of the data. Then we define an indicator function on the small cluster, and we look for the average function value on these three clusters. As can be seen from these figures, our algorithm estimates the mean very well while random sampling (RNDS) needs more reference points to catch up the mean. More importantly, in this experiment by selecting $28$ reference data, $C_{\rm{COS}}=15.159$ while $C_{\rm{CSO}}= 0.075$.

Further, Fig.~\ref{fig:dd} sketches the impact of the shaping parameter, $\ell$ from \eqref{opp}, in our formulation on the error behavior of the same stochastic block model.  Clearly, incorporating a multi-step kernel yields much better performance in estimating the cluster coresets and weights.

\begin{figure*}[!h]
    \centering
    \begin{tabular}{cc}
          \includegraphics[width=.2\pdfpagewidth,height=.15\textwidth]{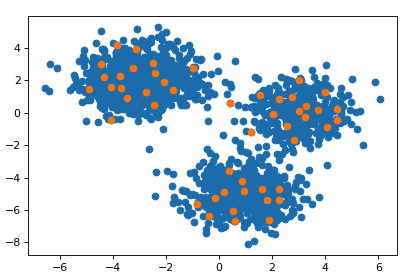} & 
          \includegraphics[width=.2\pdfpagewidth,height=.15\textwidth]{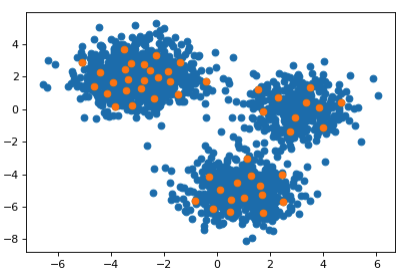} \\
          Cost associated & No cost associated
    \end{tabular}
%
    \caption{$2\%$ of coresets selected two cases, variable sampling cost with $\kappa=0.2$ (left) and uniform cost (right). The upper left-hand component, the middle component, and the lower component contain respectively $50\%$, $20\%$, $30\%$ of the data, and the average cost per data in each component is  $3.25$, $0.51$, and $2.04$,
    respectively. }\label{fig:animals}
\end{figure*}

\begin{figure*}[!h]
    \centering
    \begin{tabular}{ccc}
    \includegraphics[width=.28\textwidth,height=.15\textwidth]{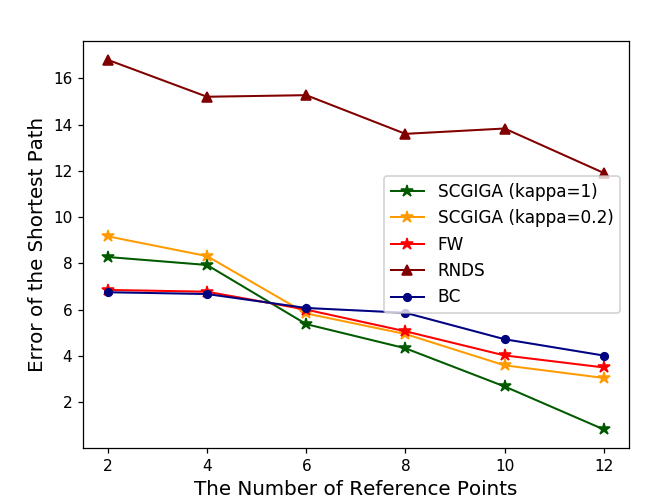} &
    \includegraphics[width=.28\textwidth,height=.15\textwidth]{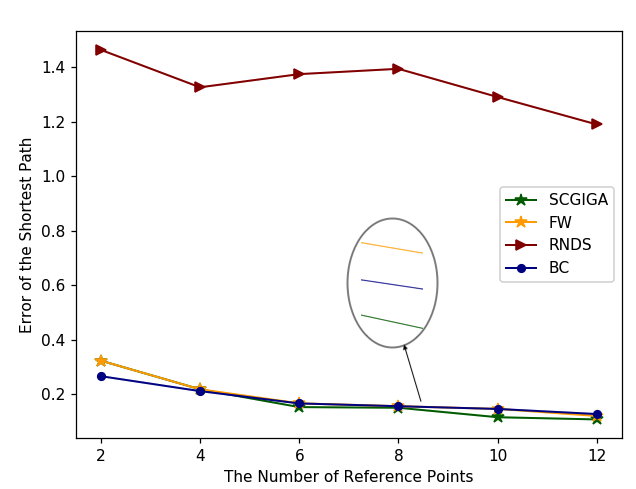} &
     \includegraphics[width=.28\textwidth,height=.15\textwidth]{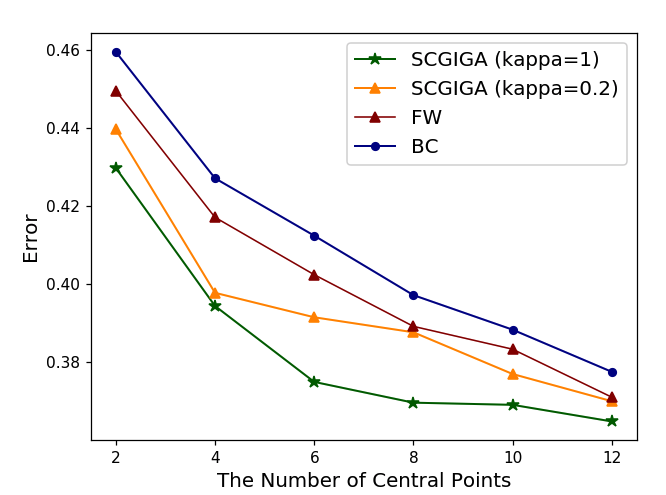}\\ 
A Powerlaw tree graph & A random generated graph & Facebook ego network 
    \end{tabular}
%
%
%
     \caption{Shortest path comparison of different algorithms on different graphs.}\label{fig:shortestpath}
\end{figure*}

Finally Fig.~\ref{fig:animals} shows three Gaussian clusters with the same mean vectors and covariance matrices as in the first experiment is considered. 
These two figures demonstrate the way that the proposed algorithm selects points when there is varying cost associated with each point and the special case of equal costs for choosing each point. In the figure, when there is a higher average cost of sampling on the largest cluster compared to the smallest,
the algorithm adapts to resample in a way not simply proportional to the number of points in each cluster.
In contrast, when no cost is associated the number of selected points are obviously less than that of the largest component.

\subsection{SHORTEST PATH ON GRAPH}

In this set of experiments, we assume that a graph $\mathcal{G}$ or an adjacency matrix is given. The goal is to find the average distance from a vertex to the rest of the graph, where the distance between two points is computed using Dijkstra’s algorithm~\footnote{Dijkstra’s algorithm is a greedy nature algorithm that looks for the minimum weighted vertex on every iteration.}. In this experiment, rather than computing the shortest path between a given node and every other node on the graph, we calculate the shortest path between the given node and just the reference points $S$.  This is of particular cost benefit on trees that have large diameter (maximum distance between two nodes) but low average path distance (e.g., trees, powerlaw graphs, etc).  

Fig.~\ref{fig:shortestpath} shows the results of the comparison of our algorithm, FW, Betweenness Centrality (BC) and random sampling (RS) on a Powerlaw Tree graph~\cite{Aiello2001}.  The error is defined as the difference between the weighted average distance of each vertex of the graph from the reference vertices and the average distance of each vertex from all the vertices of the graph. Fig.~\ref{fig:shortestpath} also demonstrates the same results for a randomly generated graph. In these two figures, both cases i.e., when fixed but different costs are associated with each data (vertex) and when equal costs on the data are included.

\textbf{Ego Networks~~}
In this experiment we consider a special type of network called an Ego Network. In an Ego Network, there is a “central” vertex (ego vertex) which the network highly depends on or revolves around. We consider the real-world Facebook Ego Networks dataset with 4,964 nodes~\cite{egonips}. The dataset contains the aggregated network of some users' Facebook friends. In this dataset, vertices represent individuals on Facebook, and edges between two users mean they are Facebook friends. The Ego Network connects a Facebook user to all of his Facebook friends and are then aggregated by identifying individuals who appear in multiple Ego Network. Algorithms such as betweenness centrality (BC) were proposed to measure the importance (centrality) of a user in the network. These algorithms select a user as a central one by looking at how many shortest paths pass through that user (vertex). The more shortest paths that pass through the user, the more central the user is in the Facebook network. We run our algorithm on the Facebook dataset to choose the central nodes. We then compare our algorithm with the BC and FW in terms of the error performance metrics described in the first experiment. The results in Fig.~\ref{fig:shortestpath} show that the developed algorithm in this work perform better in selecting the central points in the Facebook graph. 






\section{DISCUSSION AND CONCLUSIONS} \vspace{-2mm}
In this paper, we introduced a scalable, and theoretically-sound algorithm for 
minimizing the estimation error of the expected value of a function over entire graph.
Our proposed method can be applied to semi-supervised classification in graph neural networks (GCNs), social network graphs, point clouds, and on many other applications where a similarity metric between data points can be defined. 
We provided theoretical guarantees
on the convergence of the estimated mean and validated its efficiency empirically on real and synthetic datasets. Our work also opens up the notion of optimizing when there is a non-uniform cost of sampling, and provides a simple parameter for the trade off between accuracy and cost.

\section*{ACKNOWLEDGEMENTS}
AC was funded by NSF-DMS grants 1819222 and 2012266, and Sage Foundation Grant 2196.
BM was partially supported by SNSF P2EZP2\_172187.
\small{
\bibliographystyle{abbrv}

\bibliography{Ref_UAI}}

\onecolumn
\appendix   
\section{Proof of Theorem \ref{thm:relaxedopt}}

Because $P$ is a bistochastic matrix, and we know
$P^* = \frac{1}{\sqrt{n}}\mathds{1}$, we can lower bound 
\begin{align*}
    1\ge P(w)^T P^* &= \sum_i w_i \frac{P_i^T}{\|P_i\|} \frac{1}{\sqrt{n}}\mathds{1} \\
    &= \frac{1}{\sqrt{n}}\sum_i \frac{w_i}{\|P_i\|}\\
    &\ge \frac{1}{\min\|P_i\| \sqrt{n}}
\end{align*}
Similarly, $C(S)\le C^k_{max}$.  Now given $\lambda \le  \frac{1-\kappa}{C^k_{max} \min\|P_i\| \sqrt{n}}$, we compute
\begin{align*}
    \max_w P(w)^T P^* - \lambda C(S) &\ge \max_w P(w)^T P^* - \frac{1-\kappa}{\min\|P_i\| \sqrt{n}}\\
    &\ge \max_w P(w)^T P^*\left(1 - \frac{1-\kappa}{\min\|P_i\| \sqrt{n}} \frac{1}{ P(w)^T P^*} \right)\\
    &\ge max_w P(w)^T P^*\left(1 - \frac{1-\kappa}{\min\|P_i\| \sqrt{n}} \min\|P_i\|\sqrt{n} \right)\\
    &\ge \kappa \max_w  P(w)^T P^*.
\end{align*}

\section{Auxiliary Lemmas}
First, we make a few statements related to initialization of the process.   Lemma 3.4 from \cite{Campbell18_ICML} directly applies to this problem, and thus $\delta_k \in [0,1]$ $\forall k$.

\begin{lemma}
$\langle Pw_1,\frac{1}{n}\mathds{1} \rangle\ge \frac{\kappa}{\sqrt{n}\sum_i \|P_i\|}$
\end{lemma}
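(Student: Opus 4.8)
The plan is to make the whole statement collapse to a one-line computation by first unwinding what Algorithm~\ref{alg:GIGA} does on its very first iteration. Since the algorithm initializes $P_{w_0}=0$, at $k=0$ every projection term involving $P_{w_0}$ drops out: the residual direction is $a_0=P_*$, and for each vertex $a_{0v}=P_{*v}:=P_v/\|P_v\|$, the normalized $v$-th column. The step-size formula in line~9 likewise simplifies (its $P_{w_0}$-terms vanish and it reduces to $\delta_0=1$), so the first geodesic step lands exactly on the chosen direction, i.e. the first iterate satisfies $P(w_1)=P_{*v_0}$, where $v_0$ is the vertex the algorithm selects out of the candidate set $S$. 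Thus the entire lemma reduces to lower bounding the single inner product $\langle P_{*v_0},P_*\rangle$.

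First I would evaluate the alignment scores exactly using that $P$ is doubly (bi-)stochastic, exactly as in the proof of Theorem~\ref{thm:relaxedopt}. Each column obeys $\mathbbm{1}^T P_v=1$, and since $P_*=\tfrac{1}{\sqrt n}\mathbbm{1}$ this gives, for every $v$, $\langle P_{*v},P_*\rangle=\frac{1}{\sqrt n\,\|P_v\|}\,\mathbbm{1}^T P_v=\frac{1}{\sqrt n\,\|P_v\|}$. Consequently the maximizing vertex $v^\ast$ is simply the minimum-norm column and $\max_v\langle P_{*v},P_*\rangle=\frac{1}{\sqrt n\,\min_v\|P_v\|}$. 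Next I would invoke the $\kappa$-selection rule defining $S$: every $v_0\in S$ (in particular the minimum-cost one actually chosen in line~8) has alignment within a factor $\kappa$ of the best, so $\langle P_{*v_0},P_*\rangle\ge \kappa\max_v\langle P_{*v},P_*\rangle=\frac{\kappa}{\sqrt n\,\min_v\|P_v\|}$. Finally, the crude but sufficient bound $\min_v\|P_v\|\le\sum_i\|P_i\|$ yields $\langle P(w_1),P_*\rangle\ge \frac{\kappa}{\sqrt n\,\min_v\|P_v\|}\ge \frac{\kappa}{\sqrt n\,\sum_i\|P_i\|}$, which is the claim. (For the cost-free case $\kappa=1$ one recovers the tighter $\frac{1}{\sqrt n\,\min_v\|P_v\|}$, and a weighted-average argument over the columns would even give the stronger $\frac{\sqrt n}{\sum_i\|P_i\|}$; neither is needed to match the stated bound.)

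I expect the only delicate point to be the bookkeeping of the first iterate, namely verifying that with $P_{w_0}=0$ the residual, the per-vertex alignment scores, and the step size all simplify so that $P(w_1)$ is exactly the unit vector $P_{*v_0}$, together with checking that routing the selection through the cost-minimizing vertex inside $S$ never costs more than the factor $\kappa$. Everything downstream is an immediate consequence of double stochasticity and the elementary inequality $\min_v\|P_v\|\le\sum_i\|P_i\|$, so no genuinely hard estimate is involved.
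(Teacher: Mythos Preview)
Your proposal is correct and follows exactly the approach the paper intends: the paper's ``proof'' simply cites Lemma~3.1 of \cite{Campbell18_ICML} and notes the extra $\kappa$ factor from the cost-aware selection, and you have faithfully unpacked that argument---initialization collapses the first step to $P(w_1)=P_{*v_0}$, bistochasticity gives $\langle P_{*v},P_*\rangle=\tfrac{1}{\sqrt{n}\,\|P_v\|}$, the $\kappa$-rule furnishes the multiplicative loss, and $\min_v\|P_v\|\le\sum_i\|P_i\|$ finishes it. If anything, you have supplied more detail than the paper itself.
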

Proof follows equivalently to Lemma 3.1 from \cite{Campbell18_ICML}, with added caveat that our choice of weights is within $\kappa$ of maximum value.

\begin{lemma}\label{lemma:dcostbound}
The cost aware geodesic alignment $\langle a_t, a_{t,v_k}\rangle$ satisfies 
\begin{equation*}
  \langle a_k, a_{k,v_k}\rangle \ge \kappa \tau \sqrt{J_t}\vee f(t)  ,
\end{equation*}
for $$f(x) = \kappa\frac{\sqrt{1-x}\sqrt{1-\beta^2} \epsilon + \sqrt{x} \beta}{\sqrt{1 - \left(\sqrt{x}\sqrt{1-\beta^2}\epsilon -\sqrt{1-x}\beta\right)^2}}$$
and
$$\beta = 0 \wedge \min\langle \ell_n, \frac{1}{\sqrt{n}}\mathds{1}\rangle  \textnormal{ s.t. } \langle \ell_n, \frac{1}{\sqrt{n}}\mathds{1} \rangle>-1. $$
\end{lemma}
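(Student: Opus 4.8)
The plan is to reduce the statement to the uncosted geodesic alignment bound of \cite{Campbell18_ICML} and then absorb the cost-aware selection rule into a single factor of $\kappa$. The starting point is Lines 5--8 of Algorithm \ref{alg:GIGA}: the vertex $v^*$ maximizes the alignment $\langle a_k, a_{k,v}\rangle$, the set $S$ collects every vertex whose alignment is within a factor $\kappa$ of this maximum, and $v_k$ is the cheapest member of $S$. Consequently the chosen vertex satisfies
\begin{equation*}
\langle a_k, a_{k,v_k}\rangle \;\ge\; \kappa\,\langle a_k, a_{k,v^*}\rangle \;=\; \kappa\,\max_{v\in V}\,\langle a_k, a_{k,v}\rangle,
\end{equation*}
so it suffices to lower bound the \emph{uncosted} maximal alignment $\max_{v}\langle a_k, a_{k,v}\rangle$ and then multiply by $\kappa$. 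Inspecting the definition of $f$, the claimed bound $\kappa\tau\sqrt{J_t}\vee f(t)$ is exactly $\kappa$ times the uncosted bound $\tau\sqrt{J_t}\vee \tilde f(t)$ with $f=\kappa\tilde f$, so matching the $\kappa$ factors is automatic once the uncosted bound is established.

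For the term $\tau\sqrt{J_t}$ I would follow the averaging argument of \cite{Campbell18_ICML}. Because $P$ is (bi)stochastic, $\sum_v P_v = P\mathbbm 1 = \mathbbm 1$, so $P_* = \tfrac{1}{\sqrt n}\mathbbm 1 = \sum_v b_v \tfrac{P_v}{\|P_v\|}$ with nonnegative coefficients $b_v = \|P_v\|/\sqrt n$. The residual $r_k = P_* - \langle P_*, P_{w_k}\rangle P_{w_k}$ is then the same combination $\sum_v b_v r_{k,v}$ of the vertex residuals $r_{k,v}$, and pairing with the unit direction $a_k = r_k/\|r_k\|$ gives $\|r_k\| = \sum_v b_v \|r_{k,v}\|\,\langle a_k, a_{k,v}\rangle \le \big(\max_v \langle a_k, a_{k,v}\rangle\big)\sum_v b_v\|r_{k,v}\|$, where the inequality uses $b_v\ge 0$. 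Since $\|r_{k,v}\|\le 1$ and $\|r_k\| = \sqrt{J_t}$ with $J_t = 1-\langle P_*, P_{w_k}\rangle^2$, rearranging yields $\max_v\langle a_k, a_{k,v}\rangle \ge \sqrt{J_t}\big/\sum_v b_v = \tau\sqrt{J_t}$ with the $k$-independent constant $\tau = \sqrt n/\sum_v\|P_v\|$, the same geometric quantity controlling the initialization bound in the preceding lemma.

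The second branch $f(t)$ handles the regime in which the averaging bound is weak --- when the iterate is already nearly aligned with $P_*$, or when the most anti-aligned vertex controls the geometry. Here I would restrict attention to the two-dimensional plane spanned by $P_{w_k}$, $P_*$, and the extremal vertex direction, and read off the alignment from the angle-addition structure visible in $f$: writing $\sqrt{x}$ and $\sqrt{1-x}$ as a cosine/sine pair, $\epsilon$ as the worst-case alignment floor guaranteed at the step, and $\beta$ as the anti-alignment offset, the ratio defining $f$ is exactly the cosine of the angle between the geodesic direction and the best available vertex direction in that plane. Taking the maximum of the two bounds produces $\tau\sqrt{J_t}\vee\tilde f(t)$, and multiplying by $\kappa$ completes the proof.

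I expect the boundary branch $f$ to be the main obstacle: establishing its exact trigonometric form, pinning down $\epsilon$ and the precise role of $\beta = 0\wedge\min_v\langle \tfrac{P_v}{\|P_v\|}, P_*\rangle$, and verifying which of the two bounds dominates in each regime all require a careful worst-case configuration analysis rather than a direct computation. By contrast, the reduction through the cost-aware rule and the averaging argument for $\tau\sqrt{J_t}$ are routine, and the auxiliary facts $\delta_k\in[0,1]$ and nonnegativity of the coefficients $b_v$ follow respectively from the cited Lemma 3.4 and the (bi)stochasticity of $P$.
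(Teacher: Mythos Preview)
Your proposal is correct and follows exactly the paper's approach: the paper's proof simply observes that the cost-aware selection rule guarantees $\langle a_k, a_{k,v_k}\rangle \ge \kappa\,\langle a_k, a_{k,v^*}\rangle$ and then cites Lemma~3.6 of \cite{Campbell18_ICML} for the uncosted bound, which is precisely your reduction. Your additional sketches of the averaging argument for $\tau\sqrt{J_t}$ and the planar trigonometric analysis for $f(t)$ go beyond what the paper writes out (it defers both entirely to the citation), but they are consistent with the intended argument.
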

\begin{proof}
The lemma is equivalent to proving Lemma 3.6 in \cite{Campbell18_ICML} with one caveat.  Here our choice of node is $v_k$, which comes from choosing the cheapest cost node location from the set $S=\{v\in V| \langle a_k, a_{kv} \rangle \ge \kappa \langle a_k,a_{kv_k} \rangle\}$  Because of this, we can recover all results from $\langle a_k,a_{kv_k} \rangle$ with only a constant $\kappa$ in front, as our choice satisfies $\langle a_k,a_{kv_k} \rangle \ge \kappa \langle a_k,a_{kv_k} \rangle$.
\end{proof}

We apply Lemma \ref{lemma:dcostbound} to prove the following Theorem that is needed, and mirrors the results from \cite{Campbell18_ICML}.
\begin{theorem}\label{thm:giga}
Assume a cost of sensor placement $C(v):V\rightarrow \mathbb{R}_+$ and a slack parameter $\kappa$. If we choose the set of points $W$ and weights $a_w$  using Algorithm \ref{alg:GIGA} such that $|W|=K$, then
\begin{align*}
    \|Pw - \frac{1}{n}\mathds{1} \| \le  \frac{
     \eta\mathop v_K}{\sqrt{n}},
\end{align*}
where $v_K = O((1 - \kappa^2 \epsilon^2)^{K/2})$ for some $\epsilon$ and $\eta = \sqrt{1-\kappa^2 \max_{i\in V}\left\langle \frac{P_i}{\|P_i\|},\frac{1}{\sqrt{n}}\mathds{1}\right\rangle^2}$.
\end{theorem}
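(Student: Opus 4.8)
The plan is to reduce the stated bound on the unnormalized residual to a bound on the normalized alignment gap, and then run the geodesic‑ascent recursion that underlies GIGA, tracking the extra factors of $\kappa$ introduced by the cost‑aware node choice. First I would use the scaling identity derived in \eqref{op443}: after the final ``scale by $\beta$'' step of Algorithm~\ref{alg:GIGA}, the returned weights solve the $\beta$‑minimization exactly, so
\begin{align*}
\left\| P(w) - \tfrac{1}{n}\mathbbm{1}\right\|^2 = \|P^*\|^2\bigl(1 - (P_{w_K}^T P_*)^2\bigr) = \tfrac{1}{n}\,J_K, \qquad J_k := 1 - (P_{w_k}^T P_*)^2,
\end{align*}
using $P^* = \tfrac{1}{n}\mathbbm{1}$, $\|P^*\|^2 = \tfrac1n$, and $P_* = \tfrac{1}{\sqrt n}\mathbbm{1}$. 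Thus it suffices to prove $\sqrt{J_K}\le \eta\,v_K$, and the prefactor $\tfrac{1}{\sqrt n}$ is exactly the $\|P^*\|$ that appears out front.

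Next I would establish the exact geodesic recursion for $J_k$. Because the step size $\delta_k$ in Line~9 is the closed‑form minimizer of the objective along the geodesic from $P_{w_k}$ toward the chosen vertex direction (the same line‑search as in \citep{Campbell18_ICML}, with $\delta_k\in[0,1]$ guaranteed as noted after Lemma~3.4), an exact geodesic step reduces the squared sine of the angle to $P_*$ by precisely the tangent‑space alignment:
\begin{align*}
J_{k+1} = J_k\bigl(1 - \langle a_k, a_{k,v_k}\rangle^2\bigr).
\end{align*}
This is the single identity that makes the whole argument telescope, and it is where the distinction from Frank–Wolfe (no correction step, geometric rather than $O(1/k)$ rate) enters.

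With the recursion in hand I would supply the two quantitative ingredients. For initialization, the first greedy step picks a single vertex whose direction lies within $\kappa$ of the best‑aligned column, so $(P_{w_1}^T P_*)^2 \ge \kappa^2\max_{i}\langle P_i/\|P_i\|, \tfrac{1}{\sqrt n}\mathbbm{1}\rangle^2$, giving $J_1 \le \eta^2$ with $\eta$ exactly as stated. For the per‑step progress I would invoke Lemma~\ref{lemma:dcostbound}: its $f(\cdot)$ branch furnishes a uniform constant lower bound $\langle a_k, a_{k,v_k}\rangle \ge \kappa\,\epsilon$, independent of $k$, so $J_{k+1}\le J_k(1-\kappa^2\epsilon^2)$. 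Telescoping from $J_1$ over the remaining $K-1$ steps yields $J_K \le \eta^2(1-\kappa^2\epsilon^2)^{K-1}$, hence $\sqrt{J_K}\le \eta(1-\kappa^2\epsilon^2)^{(K-1)/2} = \eta\, v_K$ with $v_K = O\bigl((1-\kappa^2\epsilon^2)^{K/2}\bigr)$, and substituting into the reduction of the first paragraph completes the proof.

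The main obstacle is the per‑step lower bound $\langle a_k, a_{k,v_k}\rangle \ge \kappa\epsilon$, i.e.\ extracting a \emph{constant} $\epsilon>0$ (the worst‑case reachability of a residual tangent direction by some vertex direction) rather than a vanishing one; this is precisely what upgrades the rate from sublinear to geometric, and it is the content that Lemma~\ref{lemma:dcostbound} must deliver through its $f$ branch. The subtlety specific to the cost‑aware variant is verifying that replacing the optimal vertex $v^*$ by the cheapest vertex $v_k$ within the $\kappa$‑slack set $S$ degrades each alignment by at most the factor $\kappa$, so that $\kappa$ enters the bound as $\kappa^2\epsilon^2$ in the contraction and as $\kappa^2$ inside $\eta$, consistently across both the initialization and the recursion.
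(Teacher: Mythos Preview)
Your skeleton matches the paper's: reduce to $J_k$, derive the exact recursion $J_{k+1}=J_k(1-\langle a_k,a_{k,v_k}\rangle^2)$, bound $J_1\le\eta^2$ via the $\kappa$-degraded first pick, and then telescope. The $\kappa$ bookkeeping is also right.

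The gap is in your per-step bound. You assert that the $f$ branch of Lemma~\ref{lemma:dcostbound} ``furnishes a uniform constant lower bound $\langle a_k,a_{k,v_k}\rangle\ge\kappa\epsilon$, independent of $k$,'' and then telescope geometrically from step $1$. But the lemma does not give a constant: it gives $\langle a_k,a_{k,v_k}\rangle\ge \kappa\tau\sqrt{J_k}\vee f(J_k)$, where $f$ is evaluated at the \emph{current} $J_k$. Inspecting $f(x)$, one has $f(0)=\kappa\epsilon$, but for $x$ bounded away from $0$ (and in particular when $\beta<0$) $f(x)$ can be well below $\kappa\epsilon$ and need not even be positive. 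So at the outset neither branch gives you the constant $\kappa\epsilon$: the $\sqrt{J_k}$ branch is positive but vanishing, while the $f$ branch is only near $\kappa\epsilon$ once $J_k$ is already small.

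The paper resolves this with a two-phase argument that your proposal omits. First it uses the $\kappa\tau\sqrt{J_k}$ branch to obtain the sublinear recursion $J_{k+1}\le J_k(1-\kappa^2\tau^2 J_k)$, which by induction yields $J_k\le B(k):=\dfrac{J_1}{1+\kappa^2\tau^2(k-1)}$. Since $B(k)\to 0$ and $f(B(k))\to\kappa\epsilon$, there is a finite $k^*$ after which $f(B(k))\ge\kappa\tau\sqrt{B(k)}$; from that point on the $f$ branch dominates and the recursion becomes $J_{k+1}\le J_k\bigl(1-f^2(B(k))\bigr)$, giving
\[
J_K\le B(k^*)\prod_{s=k^*+1}^{K}\bigl(1-f^2(B(s))\bigr),
\]
with $f^2(B(s))\to\kappa^2\epsilon^2$. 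The burn-in factor $B(k^*)$ and the first $k^*$ terms are absorbed into the big-$O$ constant in $v_K=O\bigl((1-\kappa^2\epsilon^2)^{K/2}\bigr)$. Your direct telescoping $J_K\le\eta^2(1-\kappa^2\epsilon^2)^{K-1}$ is therefore unjustified as stated; to make your outline go through you must insert exactly this sublinear burn-in before switching to the geometric regime.
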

\begin{proof}
We mimic the results from \cite{Campbell18_ICML}, incorporating the additional cost parameter.  We
denote $J_k := 1 - \langle \frac{Pw_k}{\|Pw_k\|},\frac{1}{\sqrt{n}} \mathds{1} \rangle.$  If we substitute this into the formula for $\delta_t$, we get 
\begin{equation*}
    J_{k+1} = J_k(1 - \langle a_t, a_{kv_k} \rangle^2).
\end{equation*}
Applying our bound from Lemma \ref{lemma:dcostbound}, we get
\begin{equation*}
    J_{k+1} \le J_k\left(1 - \kappa^2\tau^2 J_k \right).
\end{equation*}
By applying the standard induction argument used in \cite{Campbell18_ICML}, we get 
\begin{equation*}
    J_k \le B(k) := \frac{J_1}{1 + \kappa^2\tau^2(k-1)}.
\end{equation*}
Because $B(k)$ still goes to 0, and $f(B(k))\rightarrow \kappa \epsilon$, there exists a $k^*$ such that $f(B(k))\ge \kappa \tau \sqrt{B(k)}$, and since $f$ is monotonic decreasing, $f(J_t) > f(B(k))$.  Using Lemma \ref{lemma:dcostbound}, we finish with
\begin{align*}
    J_k \le B(k\wedge k^*) \prod_{s=k^*+1}^k (1-f^2(B(s)))
\end{align*}
We note that $\frac{1}{n} J_k = \|\beta^* P(w) - P^*\|^2$, so this means 
\begin{align*}
    \|\beta^* Pw - \frac{1}{n}\mathds{1}\| \le \frac{\eta \mathop C_K }{\sqrt{n}}, 
\end{align*}
for constant $C_K$ combining the denominator in $B(k)$ and the product of $\prod_{s=k^*+1}{k} 1-f^2(B(s))$, and  $\sqrt{J_1} = \eta$.  Notice that $f(B(k))\rightarrow \kappa \epsilon$ shows a rate of decay of $v=\sqrt{1 - \kappa^2 \epsilon^2}$. 
\end{proof}

\section{Proof of Theorem \ref{thm:guarantees}}\label{thm2proof}
 We note that~\cite{Steinerberger2018} proves multiple bounds on $\left| {\frac{1}{n}\sum\limits_{v \in V} {f\left( v \right) - \sum\limits_{s \in S} {{w_s}f\left( s \right)} } } \right|$.  The main bound in the paper comes from using the fact that they assume $\sum_s w_s = 1$, which allows them to break up the inner product $\|P\sum_s w_s \delta_s - \frac{1}{n} {\mathds{1}}\|$ into its subsequent terms $\left(\|P\sum_s w_s \delta_w\|^2 - \frac{1}{n}\right)^{1/2}$.  We step away from this assumption and will instead work directly with the norm $\left\| {P\sum\limits_s {{w_s}} {\delta _s} - \frac{1}{n}{\rm{ \mathds{1}}}} \right\| = \left\| Pw - \frac{1}{n}\mathds{1} \right\|$. 
 
 By the same logic as in \cite{Steinerberger2018}, we know
 \begin{align*}
     \left| {\frac{1}{n}\sum\limits_{v \in V} {f\left( v \right) - \sum\limits_{s \in S} {{w_s}f\left( s \right)} } } \right| \le \frac{\|f\|_{P_\lambda}}{\lambda^\ell}\min_{\beta,w} \|\beta P^\ell w - \frac{1}{n}\mathds{1}\|.
 \end{align*}
We can simply replace $\widetilde{P} = P^\ell$ and inherit on $\widetilde P$ in Theorem \ref{thm:giga}, in particular that we still have $\sum_i \frac{1}{n}\widetilde P_i = \frac{1}{n}\mathds{1}$.  Thus, we can apply the guarantees of Algorithm \ref{alg:GIGA} and Theorem \ref{thm:giga} to bound $\|\beta P^\ell w - \frac{1}{n}\mathds{1}\| \le \frac{\eta \mathop v_K}{\sqrt{n}}$ and attain the desired result.

\end{document}